
\documentclass[]{interact}

\usepackage{epstopdf}
\usepackage[caption=false]{subfig}
\usepackage{caption}
\usepackage{booktabs}
\usepackage{graphicx}
\usepackage[short]{optidef}
\usepackage{comment}
\usepackage{multirow}



\usepackage[natbibapa,nodoi]{apacite}
\setlength\bibhang{12pt}

\theoremstyle{plain}
\newtheorem{theorem}{Theorem}[section]

\theoremstyle{definition}

\theoremstyle{remark}

\usepackage{amsmath}               
  {
      \theoremstyle{plain}
      \newtheorem{assumption}{Assumption}
  }

\usepackage{algorithm,algcompatible,amsmath}
\usepackage{algorithm}
\usepackage{algpseudocode}

\begin{document}

\title{Safe and Efficient Manoeuvring for Emergency Vehicles in Autonomous Traffic using Multi-Agent Proximal Policy Optimisation}

\author{
\name{Leandro Parada \textsuperscript{a,1} \thanks{CONTACT Leandro Parada. Email: l.parada-pradenas20@ic.ac.uk} \thanks{\textsuperscript{1}  These authors contributed equally.}, Eduardo Candela \textsuperscript{a,1}, Luis Marques \textsuperscript{a},  Panagiotis Angeloudis \textsuperscript{a}}
\affil{\textsuperscript{a} Centre for Transport Studies, Department of Civil and Environmental Engineering, Imperial College London, London SW7 2AZ, United Kingdom}
}

\maketitle

\begin{abstract}

Manoeuvring in the presence of emergency vehicles is still a major issue for vehicle autonomy systems. Most studies that address this topic are based on rule-based methods, which cannot cover all possible scenarios that can take place in autonomous traffic. Multi-Agent Proximal Policy Optimisation (MAPPO) has recently emerged as a powerful method for autonomous systems because it allows for training in thousands of different situations. In this study, we present an approach based on MAPPO to guarantee the safe and efficient manoeuvring of autonomous vehicles in the presence of an emergency vehicle. We introduce a risk metric that summarises the potential risk of collision in a single index. The proposed method generates cooperative policies allowing the emergency vehicle to go at $15 \%$ higher average speed while maintaining high safety distances. Moreover, we explore the trade-off between safety and traffic efficiency and assess the performance in a competitive scenario. 

\end{abstract}

\begin{keywords}
Deep Reinforcement Learning,
Multi-Agent Reinforcement Learning,
Autonomous Traffic,
Autonomous Vehicles,
Emergency Vehicles,
Cooperative overtaking
\end{keywords}

\section{Introduction}

Autonomous Vehicles (AVs) can potentially help to reduce the response time and the accidents involving Emergency Vehicles (EMVs). However, many challenges related to the control of autonomous vehicles in traffic still remain. One of these major challenges is the safe and efficient manoeuvring in the presence of emergency vehicles. The interaction between AVs and EMVs in traffic generates unusual and complex multi-agent scenarios that are difficult to train for current vehicle autonomy systems. 


EMVs move at high speeds, constantly overtaking other vehicles and passing through busy intersections. Moreover, ambulances and other EMVs can move in contraflow through the opposite lane in some countries, such as the UK. Those mentioned above are why many accidents involve EMVs, several of which have fatal consequences. 
In 2020, 180 people died in the US in collisions involving EMVs. Collisions with police vehicles accounted for 132 deaths, 31 were related to ambulances, and 17 to fire trucks. Of these deaths, 69\% were due to multi-vehicle crashes \citep{nsc_injury_ev}. These numbers have been steadily increasing during the past years, as roads are becoming increasingly congested. In the UK, police-related road accidents reached a 13-year high in 2019 \citep{gayle_number_2019}. EMV accidents are also costly, both for repairs and legal services. In London, ambulance crashes cost more than $\pounds 2$ million a year, considering only repair and insurance costs, according to a Freedom of Information Act request \citep{noauthor_ambulance_2017}.

Manoeuvring in the presence of emergency vehicles is still a major challenge for current vehicle autonomy systems (SAE Automation Level 2 \citep{noauthor_j3016c_nodate}). Several incidents have been reported in the last 5 years related to autonomy systems in urban settings. For instance, in May 2022 an autonomous car blocked a Fire Truck that was responding to an emergency, which caused property damage and personal injuries \citep{marshall_autonomous_nodate}. Moreover, vehicles using the Tesla Autopilot system have been repetitively reported to crash against EMVs \citep{business_tesla_nodate}, which have resulted in several injuries and one death. Therefore, there is a need to develop algorithms that can guarantee the safe manoeuvring around EMVs. 


A few studies have proposed methods for efficient overtaking for EMVs in autonomous traffic systems. The majority of these studies focus on determining the vehicle-passing sequence that minimises the delay of EMVs \citep{humayun_autonomous_2022, nellore_traffic_2016, lu_genetic_2017}. A recent study \citep{lu_genetic_2017}  proposed a solution method based on a Genetic Algorithm to find the optimal passing sequence of EMVs in an intersection where all vehicles are connected autonomous vehicles. Using a slightly different approach, a study proposed a game-theoretic control method for efficient overtaking manoeuvres for autonomous EMVs in non-autonomous traffic \citep{buckman_semi-cooperative_2021}. The method considers both safety and efficiency and includes a behavioural model of the human driver's willingness to cooperate. A conceptual model for routing and scheduling EMVs in the context of smart cities has also been presented \citep{humagain_routing_2019}. These methods are based on heavy assumptions of the environment, such as a set of predefined routes, or rule-based vehicle movement \citep{lu_genetic_2017, nellore_traffic_2016, humayun_autonomous_2022}.

Multi-agent Deep Reinforcement Learning (MARL) has emerged as a promising approach for the development of multi-AV driving policies \citep{zhou_smarts_2020}. MARL policies are represented using Deep Neural Networks (DNN), which can model complex Autonomous Driving behaviour that would otherwise require hand-crafted features or complicated rule-based methods. In addition, MARL allows for model-free learning---the agents can learn based on their experience; therefore, no underlying model of the process is required. This feature is especially convenient for multi-agent problems, where the interaction between agents and the evolution of the environment can be challenging to model. MARL has been successfully applied to different applications in transportation \citep{farazi_deep_nodate, haydari_deep_2022}, although the applications in multi-AV environments are still scarce \citep{palanisamy_multi-agent_nodate, zhou_smarts_2020, shalev-shwartz_safe_2016}. 

The overtaking of EMVs in an autonomous traffic system is fundamentally a multi-agent problem. To guarantee a safe and efficient passage of EMVs, all vehicle movements (including the EMV) should be optimised in real-time. To the best of the authors' knowledge, only two studies \citep{dresner_human-usable_nodate-1, gonzalez_autonomous_2021} have approached the EMV overtaking problem as a multi-agent problem. However, these studies are based on the assumption that a fixed set of rules governs the interaction between agents. Rule-based methods cannot possibly cover all situations that can occur in real-world traffic. Moreover, they cannot represent coordination between agents and adapt to changing environments.

In this study, we propose a method based on MARL to design safe and efficient policies for manoeuvring in the presence of EMVs. Specifically, we use the Multi-Agent Proximal Policy Optimisation (MAPPO) algorithm to generate real-time actions for all vehicles. MAPPO has recently shown to be superior to other MARL algorithms in complex cooperative environments \citep{yu_surprising_2021}. We first formulate the overtaking of EMVs in autonomous traffic as a stochastic game, where each agent can take independent decisions based on partial observation of the environment. In order to guarantee safe manoeuvring by both the EMV and the AVs, we introduce safety metrics and use them as an objective for each agent. We then develop a traffic environment consisting of several AVs and one EMV driving through heavily congested roads/highways. We compare the proposed method with two baseline microscopic traffic models, highlighting the benefit of using a multi-agent learning algorithm. Moreover, we assess the trade-off between safety and traffic efficiency over the reward function. We show that the proposed method is scalable and can guarantee the safe passage of EMVs under heavy traffic conditions and competitive scenarios.

The contributions of this paper can be summarised as follows: 

\begin{itemize}

    \item This paper is the first to propose an approach for strategic manoeuvring in the presence of EMVs based on the multi-agent learning paradigm. Previous approaches are based on rule-based vehicle behaviour.
    
    \item It provides a new risk modelling approach based on a single risk metric that allows to avoid collisions and maintain safe distances between all vehicles in traffic.
    
    \item It provides a analysis on the trade-off between traffic efficiency (represented as the average speed) and safety (represented by the unifying risk metric).
    
    
\end{itemize}

We recognise that deploying fully autonomous EMVs will pose additional technical, regulatory and even ethical challenges. Nevertheless, it is crucial to improve how current AV technologies handle scenarios involving EMVs. Therefore, we focus on developing risk-aware AV controllers at the strategic decision-making level \citep{pannocchi_integrated_2019}, while paving the way for autonomous EMV controllers, which will be required in the future. 


The rest of the paper is organised as follows. Section 2 summarises the relevant literature on the control of EMVs and autonomous traffic. Section 3 contains the risk modelling approach. Section 4 contains the MARL formulation. Section 5 presents the main results of the different case studies. Finally, section 6 presents the conclusion along with insights into future work.

\begin{figure}[t!]
  \centering
  \includegraphics[scale=0.7]{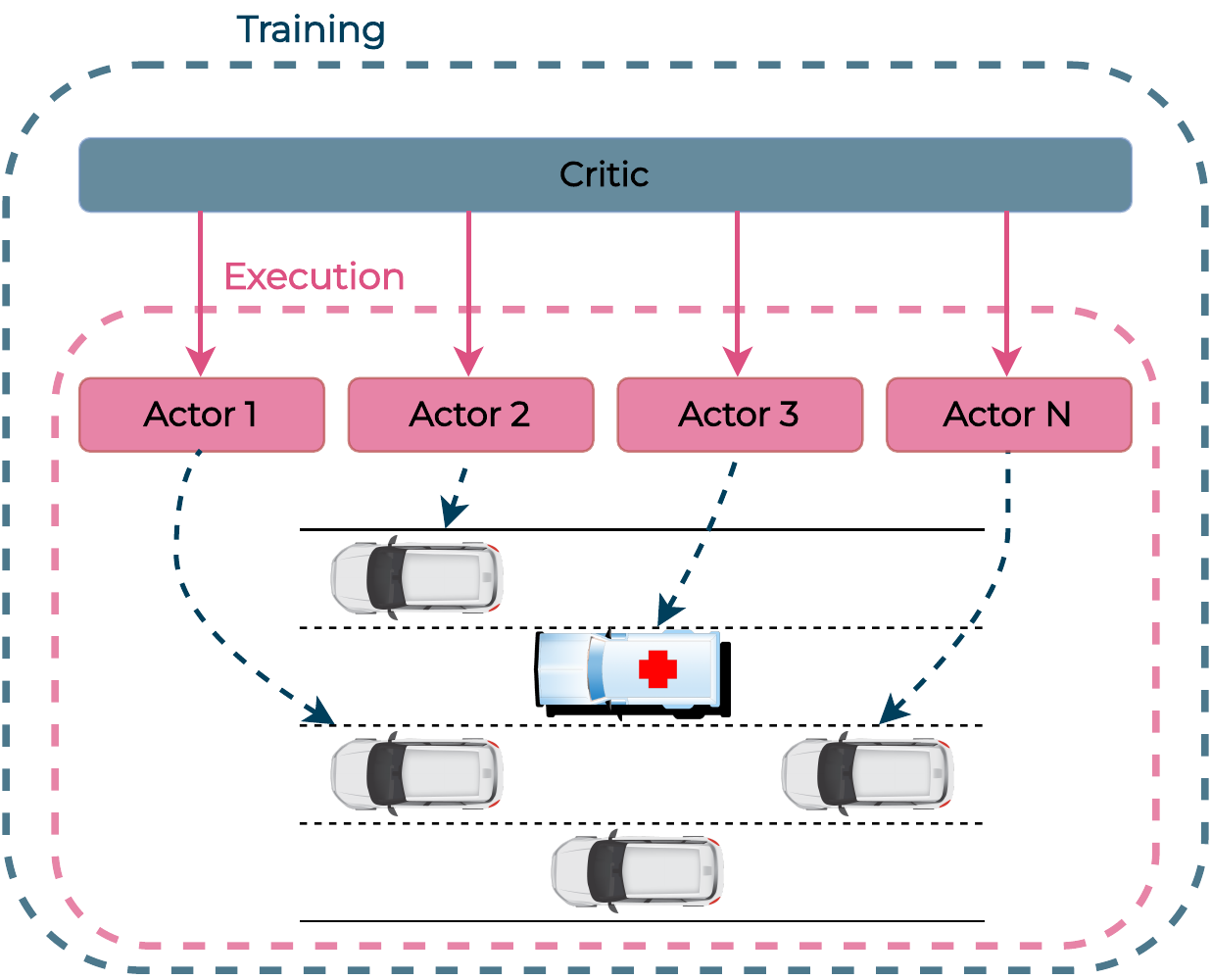}
  \caption{Graphic representation of the multi-agent autonomous traffic approach. Each AV as well as the EMV correspond to an actor network. During the training phase, all agents share a centralised critic. During execution, the critic is no longer needed.}
  \label{fig:approach}
\end{figure}

\section{Related Work}

In this section, we present the relevant literature on the control of EMVs and autonomous traffic systems. We classify the works into four categories: Intelligent Signal Control (ITS), Autonomous Traffic and, Risk-aware Autonomous Systems and Multi-Agent Reinforcement Learning (MARL).

\subsection{Intelligent Signal Control in presence of EMVs}
Extensive work has been done regarding Intelligent Traffic Signal (ITS) control in the presence of EMVs, i.e., determining the best green light sequence. A study \citep{chowdhury_priority_2016} presented an ITS method to optimise the travel time of EMVs that considers a priority system based on the type of incident. Similarly, a recent work \citep{karmakar_smart_2021} used a priority-based system to estimate the number of signal interventions while considering the impact on traffic in the surrounding areas. Virtual Traffic Lights (VTL) have also been used for efficient crossing of EMVs in autonomous traffic intersections \citep{viriyasitavat_priority_2012}. In VTL, an AV acts as a leader and manages the traffic by serving as a virtual traffic light for other vehicles. Another study \citep{shelke_fuzzy_2019} combined a message-passing approach with a priority-based traffic signal method that minimises the delay of data packages. A method for scheduling both green light sequence and duration was proposed by  \citep{nellore_traffic_2016}. Their method is based on sensing information, such as the distance to the intersection and the number of cars on the road. Two articles \citep{so_automated_2020, su_emvlight_2021} have provided joint methods to optimise the traffic signals and the vehicle movements. The latter couples a Decentralised Reinforcement Learning method to control the traffic signals with a dynamic Dijkstra's algorithm to find the optimal routes \citep{su_emvlight_2021}.

\subsection{Autonomous Traffic}

Several studies have been conducted on optimising autonomous driving in traffic, particularly in highway scenarios \citep{claussmann_review_2020}. These studies have focused almost exclusively on single-agent control, where traffic is considered part of the environment. A study proposed a game-theoretic traffic model and evaluated two single-agent policies based on Stackelberg games and Decision Trees \citep{li_game-theoretic_2016}. Deep Reinforcement Learning has also been applied to generate high-level policies for the ego vehicle  \citep{nageshrao_autonomous_2019}. A similar study \citep{ngai_multiple-goal_2011} presented a Reinforcement Learning-based method that considers seven different autonomous driving goals in order to guarantee safe and efficient overtaking manoeuvres. The Q-learning algorithm has also been used to develop efficient single-agent lane-changing policies \citep{wang_reinforcement_2018}. A recent study proposed a multi-agent game-theoretic planning approach for competitive racing scenarios \citep{wang_game-theoretic_2021}. In this approach, the agent has to collaborate to avoid collisions, but the overall goal is to beat its competitors. The study used an algorithm that solves for the approximate Nash Equilibrium. It assumed that the information of other vehicles' intentions and objective functions were completely known to every other vehicle.

Other studies have focused on the control of autonomous traffic in intersections \citep{lu_genetic_2017, zhu_linear_2015, dai_quality--experience-oriented_2016, kamal_vehicle-intersection_2015, miculescu_polling-systems-based_2016,zhang_optimal_2016, ilgin_guler_using_2014}. These studies use different methods to determine the optimal passing sequence of AVs in intersections without traffic signals. The methods include Linear Programming \citep{zhu_linear_2015}, Genetic Algorithms \citep{lu_genetic_2017}, Polling Systems \citep{miculescu_polling-systems-based_2016} and Decentralized Optimal Control \citep{zhang_optimal_2016}. 
\subsection{Risk-aware Autonomous Systems}

Only limited research has been conducted on risk-aware AV controllers. A recent study \citep{zhang2021safe} used the soft actor-critic (SAC) algorithm and control theory-based Lyapunov functions to train an agent subject to safety constraints. The authors in \citep{wen2020safe} proposed Parallel Constrained Policy Optimization (PCPO), which consists of three neural networks to estimate the policy function, value function, and a risk function. They use constraints and lack explicit risk models that capture risk when constraints are not violated. In another study \citep{nyberg2021risk}, risk metrics were coupled with a sampling-based trajectory planner. Trajectory planners require a probabilistic model to sample from, which is complex to obtain and can be overcome with model-free RL approaches. An RL method introduced hard constraints for Autonomous Driving \citep{li_game-theoretic_2016}, where cars are not allowed to take certain actions, e.g., change lane when there's a car on the other lane. Similarly, another study \citep{nageshrao_autonomous_2019} introduced an explicit short-horizon safety check to avoid making catastrophic decisions. In this study, we use state-of-the-art risk metrics for AVs to avoid collisions and maintain safe distances. 

The field of RL that pertains to learning policies where safety is a critical objective is called Safe RL \citep{garcia2015comprehensive}. \cite{garcia2015comprehensive} segment existing literature into two categories: modification of the optimisation criteria or the exploration process. In some approaches, risk functions are learned during training, such as in \citep{geibel2005risk, garcia2012safe}. To the best of our knowledge, no work has been carried out on Safe Multi-agent RL for AVs with modification of the optimisation criterion using explicit risk functions.

\subsection{Multi-agent Reinforcement Learning for Autonomous Traffic systems}

A couple of studies have introduced Multi-Agent autonomous driving gym environments for the purpose of evaluating learning algorithms.  \cite{palanisamy_multi-agent_nodate} presented a Multi-Agent Connected Autonomous Driving Gym environment (MACAD) that allows the training of different Deep RL algorithms on an intersection scenario. The environment contains different features, such as the nature of agents (homogeneous/non-homogeneous), the nature of tasks (cooperative/competitive), and the nature of observations (full/partial). Similarly, \cite{zhou_smarts_2020} introduced a multi-agent gym environment for AVs that considers different traffic situations, such as lane merging, intersections, overtaking and roundabouts. It incorporates different controllers and baselines, as well as several evaluation metrics. 

The use of MARL has been explored for different tasks in autonomous traffic.
\cite{shalev-shwartz_safe_2016} approached the double-merging problem using a Policy Gradient method. Cooperative lane-changing manoeuvres in the presence of Human-Driven Vehicles (HDVs) have also been explored \citep{zhou_multi-agent_2022}. Similarly,  \cite{chen_deep_2022} proposed a MARL method for highway ramp-merging that considers the presence of HDVs was explored. \cite{dong_drl-based_2020} presented a combination of Graph Convolutional Neural Networks (GCN) with the Deep Q Network (DQN) algorithm for safe and cooperative lane-changing decisions on highways. Despite these varied applications, MARL has not been applied for the safe manoeuvring of autonomous traffic in the presence of EMVs.

\begin{table*}
\centering
\caption{Summary of literature on the passing of EMVs in traffic.}
\label{lit_review}
\resizebox{\textwidth}{!}{%
\begin{tabular}{@{}llccl@{}}
\toprule
Author & Approach & Autonomous Traffic & Rule-based & Scenario
\\ \hline
\cite{lu_genetic_2017} & Genetic Algorithm & \checkmark & x & Junction \\ 
\cite{humayun_autonomous_2022} & Priority rules & \checkmark & \checkmark & Junction 
\\
\cite{nellore_traffic_2016} & Distance-based algorithm & x & \checkmark & Junction \\
\cite{buckman_semi-cooperative_2021} & Game-theoretic & \checkmark & x & Highway \\
\cite{karmakar_smart_2021} & Priority rules & x & \checkmark & Junction \\
\cite{shelke_fuzzy_2019} & Priority ITS & x & \checkmark & Junction \\
\cite{dresner_human-usable_nodate-1} & Reservation-based & \checkmark & \checkmark & Junction\\
 \cite{gonzalez_autonomous_2021} & Priority rules & \checkmark & \checkmark & Junction

 \\

 \bottomrule
\end{tabular}}
\end{table*}

\section{Modelling Risk}

The deployment of AVs on roads poses significant safety risks. Moreover, training and validating AV controllers requires exposure to complex hazardous scenarios involving humans. Hence the necessity for risk-aware AV controllers that can better predict and avoid dangerous situations. 

We introduce a single risk index $r\in[0,1]$ that represents the likelihood of a collision happening between any two vehicles moving on a two-dimensional surface. This index is the product of a longitudinal risk index $r_{lon}\in[0,1]$ and a lateral risk index $r_{lat}\in[0,1]$, which are piecewise linear functions that depend on the definitions of safe longitudinal and lateral distances , respectively \citep{shalev2017formal}. The minimum safe longitudinal distance is

\begin{eqnarray}
	d^{lon}_{min} = \left[ v_{r}\rho + \frac{1}{2}\rho^{2}a_{max} + \frac{(v_{r} 
	+ \rho a_{max})^{2}}{2b_{min}} - \frac{v_{f}^{2}}{2b_{max}} \right]_{+},
\end{eqnarray}

where $[x]_{+}:=\text{max}\{x,0\}$; $c_{r}$ is a car with velocity $v_{r}$ that drives behind another car $c_{f}$ (in the same direction) with velocity $v_{f}$. For any braking of $c_{f}$ of at most $b_{max}$, $c_{r}$ has a response time of $\rho$ during which it accelerates by at most $a_{max}$, and immediately after the response starts braking by at least $b_{min}$ until necessary. Notice that the closer the two vehicles are to colliding, $c_{r}$ will have to brake harder to avoid crashing, until its brakes reach their maximum capability (which we denote $B$) and a collision is unavoidable. When replacing $b_{min}$ with $B$ in $d^{lon}_{min}$, a tighter bound is defined for the safe longitudinal distance as

\begin{eqnarray}
	d^{lon}_{min,brake} = \left[ v_{r}\rho + \frac{1}{2}\rho^{2}a_{max} + \frac{(v_{r} 
	+ \rho a_{max})^{2}}{2B} - \frac{v_{f}^{2}}{2b_{max}} 
	\right]_{+}.
\end{eqnarray}

Let $d^{lon}$ be the current longitudinal distance between cars. We define the longitudinal risk index

\begin{eqnarray}
  r_{lon}=\left\{
  \begin{array}{@{}ll@{}}
    0, & \text{if}\ d^{lon} \geq d^{lon}_{min} > 0 \\
    1-\frac{d^{lon} - d^{lon}_{min,brake}}{d^{lon}_{min} - d^{lon}_{min,brake}}, & \text{if}\ d^{lon}_{min} \geq d^{lon} \geq d^{lon}_{min,brake} > 0 \\
    1, & \text{otherwise} \\
  \end{array}\right.\qquad.
\end{eqnarray}

Similarly, for the lateral analysis, the safe lateral distance is 

\begin{eqnarray}
	d^{lat}_{min} = \left[ \frac{v_{left} + v_{left,\rho}}{2}\rho 
	+ \frac{v^{2}_{left,\rho}}{2b^{lat}_{min}}
	- \left(\frac{v_{right} + v_{right,\rho}}{2}\rho 
	- \frac{v^{2}_{right,\rho}}{2b^{lat}_{min}} \right)
	\right]_{+},
\end{eqnarray}
where w.l.o.g. car $c_{left}$ is to the left of $c_{right}$, $v_{left}$ and $v_{right}$ are their respective lateral velocities, $\rho$ is the same response time as in the longitudinal case, during which the two cars will apply a maximum lateral acceleration of $a^{lat}_{max}$ toward each other, and after that both cars will apply lateral braking of at least $b^{lat}_{min}$ until necessary; $v_{left,\rho} = v_{left} + \rho a^{lat}_{max}$ and $v_{right,\rho} = v_{right} - \rho a^{lat}_{max}$. As with the safe longitudinal distance, when replacing $b^{lat}_{min}$ with $B^{lat}$ (the maximum capable lateral braking of the vehicles) in $d^{lat}_{min}$, a tighter bound is defined for the safe lateral distance as 

\begin{eqnarray}
	d^{lat}_{min,brake} = \left[ \frac{v_{left} + v_{left,\rho}}{2}\rho 
	+ \frac{v^{2}_{left,\rho}}{2B^{lat}}
	- \left(\frac{v_{right} + v_{right,\rho}}{2}\rho 
	- \frac{v^{2}_{right,\rho}}{2B^{lat}} \right)
	\right]_{+}.
\end{eqnarray}

Let $d^{lat}$ be the current lateral distance between cars. We define the lateral risk index

\begin{eqnarray}
  r_{lat}=\left\{
  \begin{array}{@{}ll@{}}
    0, & \text{if}\ d^{lat} \geq d^{lat}_{min} > 0 \\
    1-\frac{d^{lat} - d^{lat}_{min,brake}}{d^{lat}_{min} - d^{lat}_{min,brake}}, & \text{if}\ d^{lat}_{min} \geq d^{lat} \geq d^{lat}_{min,brake} > 0 \\
    1, & \text{otherwise} \\
  \end{array}\right.\qquad.
\end{eqnarray}
When multiplying the lateral and longitudinal risk indices, and adding risk propensity parameters $\beta$, $\gamma > 0$, the unified risk index is

\begin{eqnarray}
	r = \left(r_{lon}\right)^{\beta}\left(r_{lat}\right)^{\gamma},
\end{eqnarray}

which clearly lies in the range $[0,1]$, and only takes a value $>0$ when both longitudinal and lateral risk indices are also $>0$ simultaneously. The larger the risk propensity parameters are, the smaller the risk index becomes. When the propensity parameters are $<1$, the risk propensity becomes a risk aversion, since the longitudinal and lateral risk indices are increased.

\begin{theorem}
\label{t1}
$r=0$ if and only if a collision is impossible to occur in the future given the current states of the vehicles (i.e. positions, velocities and accelerations) and following the braking and response time assumptions. 
\end{theorem}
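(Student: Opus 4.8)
The plan is to separate an elementary algebraic reduction from the kinematic content of the two safe distances. First I would observe that, since $\beta,\gamma>0$ and $r_{lon},r_{lat}\in[0,1]$, the definition $r=(r_{lon})^{\beta}(r_{lat})^{\gamma}$ forces $r=0$ if and only if $r_{lon}=0$ or $r_{lat}=0$. Reading off the piecewise definition of $r_{lon}$ shows $r_{lon}=0$ exactly on $\{d^{lon}\ge d^{lon}_{min}\}$: the first branch vanishes there, the middle branch $1-\frac{d^{lon}-d^{lon}_{min,brake}}{d^{lon}_{min}-d^{lon}_{min,brake}}$ vanishes only at the endpoint $d^{lon}=d^{lon}_{min}$ and is strictly positive for $d^{lon}<d^{lon}_{min}$ (using $b_{min}<B$ so the denominator is positive), and the last branch equals $1$; symmetrically $r_{lat}=0\iff d^{lat}\ge d^{lat}_{min}$. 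Hence the claim reduces to showing that, under the stated braking and response-time dynamics, a collision is impossible if and only if $d^{lon}\ge d^{lon}_{min}$ or $d^{lat}\ge d^{lat}_{min}$. (I would work throughout in the non-degenerate regime $d^{lon}_{min},d^{lat}_{min}>0$; a vanishing minimum safe distance means that constraint is inactive and should simply be read as zero risk.)

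For the ``if'' direction, suppose without loss of generality $d^{lon}\ge d^{lon}_{min}$. Among all admissible continuations the longitudinal gap between the two vehicles is smallest when the leader brakes at $b_{max}$ while the follower accelerates at $a_{max}$ over the response time $\rho$ and thereafter brakes at exactly $b_{min}$; integrating these kinematics, the gap never falls below its value at the common stopping configuration, namely $d^{lon}-d^{lon}_{min}\ge 0$. So the two vehicles never overlap longitudinally and hence never collide, and the lateral case is the mirror image using $d^{lat}_{min}$. This is precisely the safe-distance guarantee of the RSS model, which I would cite from \citep{shalev2017formal} rather than re-derive.

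For the ``only if'' direction I would argue the contrapositive: assuming $d^{lon}<d^{lon}_{min}$ and $d^{lat}<d^{lat}_{min}$, construct an admissible evolution that collides. Let the follower accelerate longitudinally at $a_{max}$ during $\rho$ and then brake at $b_{min}$ while the leader brakes at $b_{max}$, and \emph{at the same time} let both vehicles apply $a^{lat}_{max}$ toward one another during $\rho$ and then brake laterally at $b^{lat}_{min}$; this is admissible because the model assigns independent longitudinal and lateral acceleration budgets. The computation of the previous paragraph now shows the longitudinal gap converges to $d^{lon}-d^{lon}_{min}<0$ and the lateral gap to $d^{lat}-d^{lat}_{min}<0$ at the (finite) stopping time. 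The step I expect to be the main obstacle is the temporal coordination here: a priori the one-dimensional overlap windows could be brief and occur at disjoint times, so ``both safe distances violated'' does not immediately give a two-dimensional collision. The resolution is that under this worst-case continuation each gap is eventually monotone and tends to a strictly negative limit, so each overlap persists up to and past the all-stopped configuration and the two therefore coincide there; making the monotonicity airtight takes a short case analysis on the sign of the relative velocity across $t=\rho$, which goes through cleanly in the physically relevant regime $b_{min}<b_{max}$ (the lateral case being automatic, since there the relative velocity is unimodal about $t=\rho$). Degenerate sub-cases ($d_{min}=0$, $b_{min}=B$ making the middle branch undefined, or coupled acceleration budgets) would have to be excluded by hypothesis or handled by additionally exploiting the freedom to retime the adversarial accelerations within $[0,\rho]$.
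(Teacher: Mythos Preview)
Your algebraic reduction --- $r=0 \iff r_{lon}=0 \lor r_{lat}=0 \iff (d^{lon}\ge d^{lon}_{min}>0)\lor(d^{lat}\ge d^{lat}_{min}>0)$ --- is exactly what the paper does, and in fact the paper's proof stops there: it simply asserts that ``at least one safety distance is being held'' is what ``collision is impossible'' means, implicitly deferring to the RSS definitions of \cite{shalev2017formal}. So your first paragraph already reproduces the paper's entire argument.

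Everything after that is genuinely additional work on your part. The paper treats the equivalence between ``a safe distance is held'' and ``collision is impossible under the model assumptions'' as essentially definitional, whereas you attempt to derive it from the kinematics. For the ``if'' direction this is harmless since you end up citing the same RSS guarantee. For the ``only if'' direction you correctly identify a subtlety the paper does not address at all: violating both one-dimensional safe distances yields longitudinal overlap at some time and lateral overlap at some time, but a two-dimensional collision requires these to occur simultaneously. Your proposed resolution (both gaps are eventually monotone and terminate strictly negative, hence the overlap windows coincide at the all-stopped configuration) is a reasonable line of attack, though as you note it needs a case analysis and some regularity hypotheses. In short, your proposal is correct and strictly more rigorous than the paper's proof; the paper simply does not engage with the kinematic content you are worrying about.
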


\begin{proof}
$r=0 \iff r_{lon}=0 \lor r_{lat}=0 \iff (d^{lon} \geq d^{lon}_{min} > 0) \lor (d^{lat} \geq d^{lat}_{min} > 0)$, meaning at least one safety distance is being held.
\end{proof}

\begin{theorem}
\label{t2}
If $r=1$, avoiding a potential collision cannot be guaranteed under the assumptions of safe longitudinal and lateral distances, independent of the risk propensity parameters $\beta$ and $\gamma$.
\end{theorem}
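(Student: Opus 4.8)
The plan is to follow the template of the proof of Theorem~\ref{t1}: push the defining equivalences of the unified index through to the value $r=1$, and then translate the resulting distance conditions into their dynamical meaning via the thresholds $d^{lon}_{min,brake}$ and $d^{lat}_{min,brake}$.

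First I would dispose of the ``independent of $\beta$ and $\gamma$'' clause. A product of two numbers in $[0,1]$ equals $1$ only when both factors equal $1$, and for $\beta,\gamma>0$ one has $x^{\beta}=1\iff x=1$ on $[0,1]$; hence $r=\left(r_{lon}\right)^{\beta}\left(r_{lat}\right)^{\gamma}=1$ is equivalent to $r_{lon}=1$ and $r_{lat}=1$, for every admissible choice of the propensity parameters. From the piecewise definition of $r_{lon}$, the value $1$ occurs only in the third branch (or at the endpoint $d^{lon}=d^{lon}_{min,brake}$ of the middle branch), so $r_{lon}=1$ forces $d^{lon}\le d^{lon}_{min,brake}$; symmetrically $r_{lat}=1$ forces $d^{lat}\le d^{lat}_{min,brake}$.

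Next I would invoke the construction of these thresholds. By definition $d^{lon}_{min,brake}$ is the longitudinal separation at which, after the unavoidable reaction time $\rho$ (during which $c_r$ may accelerate by at most $a_{max}$), even braking at the maximal capability $B$ is only just enough to avoid contact when $c_f$ brakes at $b_{max}$. Therefore $d^{lon}\le d^{lon}_{min,brake}$ means there is an admissible longitudinal evolution --- $c_f$ braking at $b_{max}$ and $c_r$ doing no better than the full-capability response --- along which the longitudinal gap reaches $0$ at some finite time $\tau_{lon}\ge 0$; the same argument with $B^{lat}$ replacing $b^{lat}_{min}$ produces an admissible lateral evolution with lateral gap $0$ at some $\tau_{lat}\ge 0$. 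Since the safe-distance model treats the two channels as decoupled, these worst-case evolutions can be imposed jointly on the same pair of vehicles, and a collision occurs at any instant at which both projections overlap --- so no control that respects the response-time and braking assumptions can rule out a collision, which is the claim.

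I expect the last step to be the main obstacle: going from ``each channel separately admits an unavoidable contact, at times $\tau_{lon}$ and $\tau_{lat}$'' to ``there is a single admissible joint trajectory with a genuine (i.e.\ simultaneous) collision''. One has to verify that the longitudinal-overlap and lateral-overlap time sets intersect --- equivalently, that $\tau_{lon}$ and $\tau_{lat}$ can be aligned --- which may require choosing the admissible worst-case behaviours on the two channels with some care (for instance, by prolonging contact on whichever channel closes first, within what ``braking of at most $b_{max}$'' and ``of at least $b^{lat}_{min}$'' permit), and a brief check is warranted that the degenerate sub-cases where a threshold vanishes do not spoil the conclusion. Everything else is bookkeeping with the piecewise formulas, and in particular uses nothing about the propensity parameters beyond $\beta,\gamma>0$.
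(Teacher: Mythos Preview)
Your approach is essentially the paper's: reduce $r=1$ to $r_{lon}=r_{lat}=1$ via the product structure (handling the $\beta,\gamma$ clause exactly as you do), translate these to $d^{lon}\le d^{lon}_{min,brake}$ and $d^{lat}\le d^{lat}_{min,brake}$ from the piecewise definitions, and read this as simultaneous violation of both maximum-braking safety margins. The paper's own proof stops at that point with a single sentence of interpretation and does not engage with the simultaneity issue you flag as the main obstacle, so for the purposes of matching the paper you are already doing more than is required.
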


\begin{proof}
$r=1 \iff r_{lon}=1 \land r_{lat}=1 \iff (\geq d^{lon} \geq d^{lon}_{min,brake}) \land (\geq d^{lat} \geq d^{lat}_{min,brake})$, meaning both the longitudinal and lateral safety distances under maximum braking are being violated. If the other vehicle brakes, a collision can not be guaranteed to be avoided under the model conditions.
\end{proof}

\section{Learning approach}

This section presents our proposed model for the safe and efficient passing of EMVs in autonomous traffic. A diagram can be found in Figure \ref{fig:diagram_method}. Most studies have addressed this problem based on rules and heavy simplifications of vehicle-to-vehicle interactions. Rule-based methods cannot cover all possible situations in autonomous traffic; therefore, we adopt a Multi-Agent Reinforcement learning approach. MARL allows us to simulate and train agents on thousands of different scenarios and achieve effective coordination among agents. In our formulation, all AVs and the EMV are represented as learnable agents that can take independent actions based on partial observability of the environment. We adopt the following assumptions and simplifications:

\begin{assumption}
\textbf{The environment considers a set of high-level discrete actions.}
In this study, we are interested in the multi-agent coordination policies, rather than the ability of individual vehicles to follow the road and stay on track. Thus, we use a discrete action space that considers high-level actions; accelerate, brake and change lane. This is a common assumption of multi-agent autonomous driving simulators \citep{zhou_smarts_2020, palanisamy_multi-agent_nodate, zhou_multi-agent_2022}, where vehicle dynamics are considered part of the environment. 

\end{assumption}

\begin{assumption}
\textbf{All agents have partial observability of the environment.}
Fixed-range AV sensors and the stochastic nature of vehicle-to-vehicle interactions cause AVs to have a limited perception of their surrounding environment \citep{lecerf_safer_2022}. We adopt a fixed range of 70m, which is realistic according to current communication protocols \cite{kenney_dedicated_2011}.
\end{assumption}

\subsection{Multi-Agent Deep Reinforcement Learning}

\begin{figure}[t!]
  \centering
  \includegraphics[scale=0.6]{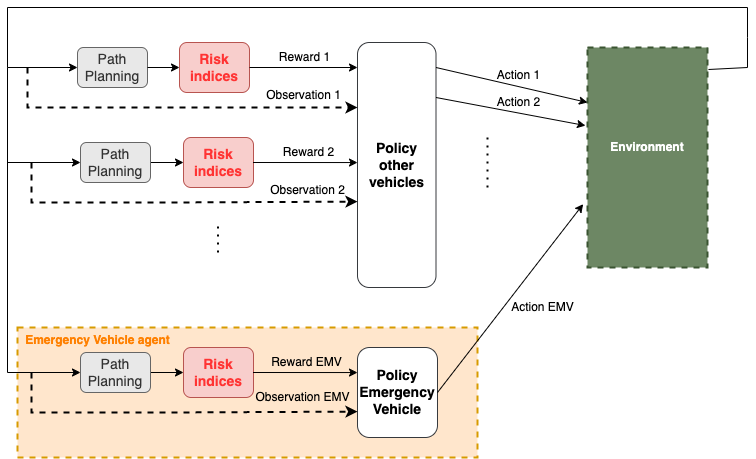}
  \caption{Diagram of the risk-aware learning methodology}
  \label{fig:diagram_method}
\end{figure}

The MARL problem statement can be formalised as a Markov game $G$, defined by the tuple $G = (N, \Omega, S, A, \mathcal{P}, \mathcal{R} , \gamma )$ where $N$ corresponds to the number of agents, $\Omega$ to the set of individual observations, $S$ to the set of states, $A$ to the joint action space, $P$ to the transition probabilities, $R$ to the reward functions and $\gamma$ to the discount factor.

We implement the Multi-Agent Policy Optimization algorithm (MAPPO), a state-of-the-art actor-critic MARL method that has shown to achieve superior performance in several coordination-heavy tasks \citep{yu_surprising_2021}. Proximal Policy Optimisation is a gradient-based method that approximates the policy directly. It includes a clipping objective $L^{Clip}$ that controls the policy update during the training period. This forces the policy update to be conservative if it deviates too much from the current policy. The actor and critic update equations for MAPOO are presented in the Appendix.

In contrast to the single-agent version, in MAPPO, all agents share a centralised critic during the training phase, but only the actors are needed in execution. Thus, it belongs to the Centralised Training Decentralized Execution (CTDE) class of algorithms. MAPPO is an on-policy method. Therefore, it does not rely on previous experience. This can make it less sample efficient than off-policy methods such as MADDPG \citep{lowe_multi-agent_2020} and QMIX \citep{rashid_qmix_2018}, at least in theory. In practice, MAPPO has been found to be a powerful and robust method in several cooperative environments, even outperforming state-of-the-art off-policy algorithms \citep{yu_surprising_2021}. The implemented MAPPO algorithm is presented in Algorithm \ref{alg:cap}.

\begin{algorithm}
\caption{MAPPO for autonomous traffic control
}\label{alg:cap}
\begin{algorithmic}
\State Initialize $\theta$, the parameters for the critic $V$ and the parameters $\phi$ for the actor $\pi$.
\For {$ep=1$ to $M$}
\State Retrieve initial state $s_{0}, \Omega_{0}$: Individual speeds, positions, lanes and perception of surrounding vehicles.
\For {$t=1$ to $T$}
\For {$j$ in $agents$}
\State Choose high-level action $a_{t}^{(j)} = \pi (\Omega_{t}^{(j)}; \theta)$
\State Calculate value $v^{(j)}=V(s_{t}^{(j)}, \phi)$
\EndFor 
\State Execute high-level actions and calculate new pose based on vehicle dynamics.
\State Observe $r_{t}, s_{t+1}, \Omega_{t+1}$.
\State Store $[s_{t}, \Omega_{t}, a_{t}, r_{t}, s_{t+1}, \Omega_{t+1}]$.

\State Compute advantage estimate $\hat{A}$ via GAE using the stored transitions.
\State Compute reward-to-go $\hat{R}$ on the stored transitions. 
\EndFor
\State Update $\theta$ using Adam optimiser on mini-batch
\State Update $\phi$ using Adam optimiser on mini-batch
\EndFor

\end{algorithmic}
\end{algorithm}

A graphic representation of the MAPPO approach for autonomous traffic is presented in figure \ref{fig:approach}. In the first instance, we consider a single critic for all agents during the training phase. A separate actor network allows agents to take individual actions during execution, without the need for a centralised state. Therefore the implementation of the proposed method results in decentralised multi-agent policies. We also explore using a separate critic network for each agent in a competitive version of the environment.

We represent the EMV as an actor taking independent actions in the environment, although we acknowledge it can be an autonomous or a human-driven vehicle. It should be noted that given the challenging and critical nature of emergency driving, it is not expected to have reliable autonomous EMVs in the coming years. Nevertheless, improving the safety of AV systems in the presence of EMVs is a crucial first step in the reliable deployment of AVs.

\subsection{Emergency Vehicle Multi-Agent Environment}

We use a bespoke Autonomous Traffic gym environment consisting of several AVs and one EMV driving through a congested road. In contrast to other gym environments modelling Autonomous Traffic, this environment considers all vehicles to be trainable agents with partial observability of their surroundings. Figure \ref{fig:environment} shows a representation of the environment. The environment was designed to ensure the safe and efficient transit of EMVs. To ensure this, we incorporate the risk metrics mentioned in the previous subsection, and performance metrics related to the speed of each vehicle.
All AVs and the EMV can take independent actions and receive an individual reward on each step. The reward function was designed to include the speed and the risk metrics of the EMV in order to guarantee cooperation between agents for prioritising the EMV.

\begin{figure}[t!]
  \centering
  \includegraphics{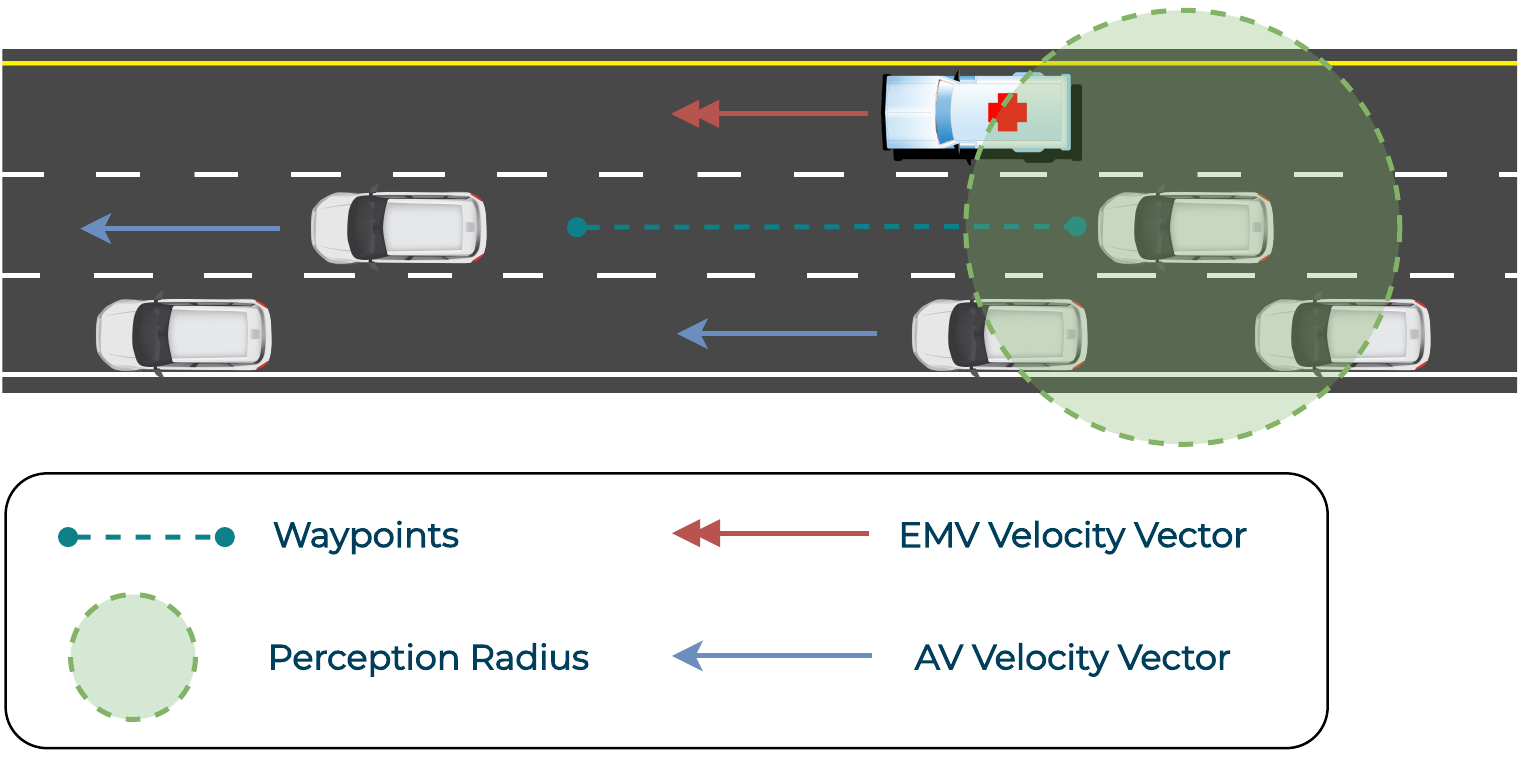}
  \caption{Representation of the multi-agent environment}
  \label{fig:environment}
\end{figure}

We define the individual elements of the environment as follows:

\begin{itemize}
    \item Agents $N$: There are two types of agents; AVs and the EMV. The AVs all have the same physical characteristics, range of speeds and perception range. The EMV has different physical characteristics and is allowed to go at higher speeds than the AVs.
    \item Observation $\Omega$: The local observation of the AVs consists of their own speed, position, lane and a limited perception of surrounding vehicles. Thus, AVs can recognise when an EMV is approaching at a certain distance but cannot see its global position. 
    \item Action $A$: All agents have a discrete action space with seven possible high-level decisions---accelerate; brake; heavy acceleration; heavy brake; change lane to the left; change lane to the right; keep previous velocity (no acceleration/braking). If an agent tries to perform a lane change but cannot for physical reasons, the action does nothing, and agents maintain their lane and speed.
    \item Reward $\mathcal{R}$: The reward function  is defined by the following equation:
    
    \begin{eqnarray}
	\mathcal{R} = r + v^{i} + p_{col} + p_{lcm} + v^{EV} + p_{lcm}^{EV}
    \end{eqnarray}
    
    Where r is agent's risk of collision as defined by equation (7), $v^{i}$ is the current speed measured in $m/s$, $p_{col}$ is a collision penalty and $p_{lcm}$ is a penalty for lane changing maneuvers.  In addition, $v^{EV}$ is EMV's current speed and $p_{lcm}^{EV}$ corresponds to an additional penalty for lane-changing manoeuvres. These last two components of the reward function guarantee cooperation among agents in the presence of the EMV. We normalised the individual components of the reward function between $0$ and $1$, except for the collision penalty, which we set to a very high value ($-100$) because of their relatively low frequency.
    
\end{itemize}

\section{Analysis}

The proposed MARL method was tested on a long track, where agents can go in all lanes in a single direction. We test two different scenarios, a two-lane road and a four-lane highway. However, the environment can be easily adapted to a multiple-lane setting with traffic going in both directions. The EMV has to make its way through the traffic by either overtaking or by AVs moving to another lane. First, we evaluate the performance of the proposed method, comparing it against two Car Following Models. Second, we consider different variations of the reward function and evaluate the trade-off between collision risk and traffic efficiency. Third, we assess the scalability of the proposed method by evaluating scenarios with a different number of agents. Finally, we present a fully competitive case, where AVs have their own independent reward and neural networks.

We used the EMV Multi-Agent Environment to train all policies in simulation. We performed a preliminary hyperparameter tunning experiment considering an instance with ten agents (1 EMV and 9 AVs). The combination of the best parameters for the MAPPO are presented in table \ref{parameters_mappo}. The parameters for the risk indices were selected using the studies on acceleration \citep{bokare2017acceleration, bosetti2014human}, they are presented in table \ref{parameters_risk}. The MARL results were averaged over four different random seeds. The response time was set to the frame rate of the simulations, which was $10 Hz$. All experiments were conducted using a 10-core, 3.70Ghz Intel Core i9-10900X CPU and an Nvidia GTX 3090 GPU. Furthermore, the parameters for the EMV Multi-agent Environment are presented in table \ref{parameters_env}. The minimum velocity is set to $7m/s$, which 

\begin{table}[h]
\footnotesize
\centering
\caption{Environment parameters.}
\label{parameters_env}
\begin{tabular}{@{}cc@{}}
\toprule
Parameter   & Values 
\\ \toprule
AV Max. Speed &  $20 m/s$   \\
AV Min. Speed & $7 m/s$ \\
AV Dimensions & $2m \times 4m \times 1.5m$\\
EMV Max. Speed &  $30 m/s$ \\
EMV Min. Speed & $7 m/s$ \\
Perception radius & $20 m$\\
Road loop length & $400 m$\\
\bottomrule
\end{tabular}
\end{table}

\begin{table}[h]
        \footnotesize
        \centering
        \caption{Risk indices parameters}
        \label{parameters_risk}
        \begin{tabular}{@{}cc@{}}
        \toprule
        Parameter   & Values 
        \\ \toprule
        Response time $\rho$ & $0.1s$  \\
        $a_{max}$ & $2.5 m/s^{2}$ \\
        $b_{max}$  &  $2.5 m/s^{2}$ \\
        $b_{min}$ & $1.0 m/s^{2}$ \\
        $B$ & $3.0 m/s^{2}$ \\
        $a^{lat}_{max}$ & $1 m/s^{2}$ \\
        $b^{lat}_{min}$ & $2.5 m/s^{2}$ \\
        $B^{lat}$ & $4.0 m/s^{2}$ \\
        $\beta$ & 1 \\
        $\gamma$ & 1 \\
        \bottomrule
        \end{tabular}
\end{table}

\begin{table}[h]
        \footnotesize
        \centering
        \caption{MAPPO parameters.}
        \label{parameters_mappo}
        \begin{tabular}{@{}cc@{}}
        \toprule
        Parameter   & Values 
        \\ \toprule
        N° episodes for training & $2000$   \\
        N° steps per episode &  $400$ \\
        Learning Rate & $5 \times 10^{-4}$ \\
        Critic Learning Rate & $5 \times 10^{-4}$ \\
        PPO epochs & $15$ \\
        Actor Network &  $MLP(64, 64, 7)$  \\       
        Critic Network & $MLP(64, 64, 1)$\\
        \bottomrule
        \end{tabular}
\end{table}

\subsection{Comparison to Car Following baselines}

We compare the proposed MARL approach against two baselines: the Gipps Car Following model \citep{gipps_model_1986} and an Adaptive Cruise Control (ACC) method based on Model Predictive Control (MPC). 

\subsubsection{Gipps Car Following Model}
 
The Gipps model is a well-known car-following approach that corresponds to the safe-distance class of models. We chose this model because it considers the risk of potential collision between vehicles and is simple to calibrate \citep{ahmed_review_2021}. In this model, the actions to change lanes are governed by a fixed set of rules. The rules consider the risk of collision, the presence of vehicles in the other lane and the potential gain in the average speed of the decision. The main disadvantage of this model is that it does not consider vehicle perception  \citep{lazar_review_2016}. In addition, minor variations on the leader vehicle can heavily affect the followers' reaction.

Gipps model determines the speed of each vehicle at time $t$ by applying two constraints on the acceleration: the desired speed must not be surpassed, and the critical safe distance should be maintained. The acceleration and deceleration can be expressed by the following equations:

\begin{eqnarray}
	v^{acc}_{n}(t+T) = v_{n}(t) + 2.5\cdot a_{n}\cdot T (1-\frac{v_{n}(t)}{v^{d}_{n}}) \cdot \sqrt{0.025 + \frac{v_{n}(t)}{v^{d}_{n}}}
\end{eqnarray}

\begin{eqnarray}
	v^{decc}_{n}(t+T) = -T \cdot d_{n} + \sqrt{T^{2} \cdot d^{2}_n + d_{n} \{ 2x_{n-1} (t) - x_{n} (t) - (S_{n-1}) - T \cdot v_{n} (t) + \frac{v_{n-1}(t)^{2}}{d'_{n-1}}} 
\end{eqnarray}

 where $n$ and $n-1$ are the follower and leader, respectively, $T$ is the reaction time, $v_{n}(t)$ and $v_{n-1}(t)$ are the speeds of the follower and leader respectively at time t. $v^{d}_{n}$ and $a_{n}$ are the follower's maximum desired speed and maximum acceleration, respectively. $d_{n}$ and $d'_{n-1}$ are the most severe braking capability and $x_{n}(t)$ and $x_{n-1}(t)$ are the longitudinal position of the follower and leader at time $t$. $S_{n-1}$ is the distance between vehicles at a stop.
 
 The implemented version of the Gipps model uses safe following distances set according to the RSS model introduced in \cite{shalev2017formal}. The reaction time was set to $T = 0.7 s$.
 
\subsubsection{Adaptive Cruise Control based on MPC}

This subsection describes the ACC method based on MPC. The method was adapted from \cite{zhu_safe_2020} for a multi-agent setting and discrete action space. The model considers a kinematic point-mass model that can be described by the following equation:

\begin{eqnarray}
    x(t+1) = Ax(t) + Bu(t)
\end{eqnarray}

\noindent where $x(t)=(S_{n-1, n}(t), \Delta V_{n-1,n}(t), V_{n}(t))^T$, $u(t)=a(t)$ and

\begin{equation}
    A
    =
    \begin{bmatrix}
    1 & \Delta T & 0 \\
    0 & 1 & 0 \\
    0 & 0 & 1
    \end{bmatrix}
\end{equation}

\begin{equation}
    B
    =
    \begin{bmatrix}
    -0.5\Delta T^{2} \\
    -\Delta T^{2} \\
    \Delta
    \end{bmatrix}
\end{equation}

The objective of an ACC is to follow the leader vehicle (LV) at the desired distance $\tilde{S}_{n-1,n}$, considering the vehicle speed and the headway $t_{hw}$:

\begin{eqnarray}
    \tilde{S}_{n-1,n} = V_n t_{hw}
\end{eqnarray}

The mathematical formulation of the MPC model is presented below:

\begin{mini!}
    {a}{\sum_{t=0}^{N-1} [(\frac{S_{n-1,n}(t)-\tilde{S}_{n-1,n}(t)}{S_{max}})^{2}+(\frac{\Delta V_{n-1,n}}{\Delta V_{max}})+(\frac{jerk(t)}{jerk_{max}})^2]}{}{}
    \addConstraint {x(t+1)}{=Ax(t) + Bu(t)}
    \addConstraint{S_{n-1,n}}{> 0}
    \addConstraint{V_{n}}{> 0}
    \addConstraint{-3 m/s^{2}}{\leq a_{n} \leq 3m/s^{2}}
\end{mini!}

where N is the prediction horizon, $S_{max}$, $\Delta V_{max}$ and $jerk_{max}$ are constants. Constraint (14b) corresponds to the point-mass model equations. Constraint (14c) ensures that the relative distance between the leader and the following vehicle is always greater than 0. Constraints (14d) and (14e) state that the speed of the following can't be negative, and its acceleration should be within a predefined range, respectively. We adopt the same parameter values used in \cite{zhu_safe_2020}, that is, $t_{hw}=1.2s$, a $S_{max}=15m$, $\Delta V_{max} = 8 m/s$. In addition, we set the jerk term to be equal to zero because the MARL formulation does not consider it as part of the objective. 

\subsubsection{Results comparison}

Table \ref{gipps} presents the comparison results between the proposed MARL method, the Gipps lane-changing model and the MPC method. We used a fixed number of AVs for this set of experiments and trained the algorithm for 2000 episodes. The results shown in the table correspond to averages over 10 test episodes. The proposed method consistently overperforms both the Gipps and MPC methods. It produces policies with higher average speeds for the EMV while maintaining safer distances between all vehicles. The collision risk determined by the risk index is also considerably lower for the MARL method. 

\def\arraystretch{1}
\begin{table*}[h]
\centering
\caption{Comparison to the Car Following baselines.}
\label{gipps}
\resizebox{\textwidth}{!}{%
\begin{tabular}{@{}cccccccc@{}}
\toprule
Scenario & Method & Reward &  EMV Speed & AV Speed & Col. rate &  Col. risk & Safety Dist. (m) \\
 \bottomrule
 \multirow{3}{*}{Road}
  & Gipps & $-452.06 \pm 279.99$ & $19.76 \pm 0.36$ & $\mathbf{19.57 \pm 0.34}$  & $15.30 \pm 10.23$   & $0.306 \pm 0.047$ & $6.51 \pm 0.92$ \\
  & MPC & $401 \pm 17.05$ & $14.29 \pm 0.45$ & $14.09 \pm 0.22$ & $1.00 \pm 3.00$ & $\mathbf{0.16 \pm 0.41}$ & $16.71 \pm 4.01$\\
  & MAPPO & $\mathbf{653 \pm 67.11}$ & $\mathbf{22.67 \pm 1.35}$  &  $19.05 \pm 0.42$ & $\mathbf{0.0} \pm \mathbf{0.0} $  & $0.175 \pm 0.013$ & $\mathbf{17.73 \pm 3.60}$ \\

\hline  
 
\multirow{3}{*}{Highway}  
   & Gipps & $229.23 \pm 110.23$ & $19.58 \pm 0.46$ & $19.47 \pm 0.25$  & $5.12 \pm 2.00$   & $0.360 \pm 0.06$ & $17.06 \pm 1.15$ \\
  & MPC & $523.44 \pm 96.08$ & $16.29 \pm 0.36$ & $14.94 \pm 0.24$ & $0.0 \pm 0.0$ & $0.11 \pm 0.020$ & $19.26 \pm 3.65$\\
  & MAPPO & $\mathbf{877.28 \pm 53.45}$ & $\mathbf{29.98 \pm 0.95}$  &  $\mathbf{19.87 \pm 0.35}$ & $\mathbf{0.0} \pm \mathbf{0.0} $  & $\mathbf{0.03 \pm 0.0025}$ & $\mathbf{37.01 \pm 7.01}$ \\
  
 \bottomrule
\end{tabular}}
\end{table*}

In contrast to the Car Following Models, the proposed method allows for the efficient transit of the EMV in heavy traffic conditions. Thus, the EMV can achieve higher speeds while keeping safe distances from other vehicles. Figure \ref{fig:boxplot_speeds} shows the distribution of average speeds for the EMV and the AVs for the three methods. The speed of the AVs is slightly lower than the maximum allowable speed $(20 m/s)$, because AVs can effectively collaborate and slow down to facilitate the passage of the EMV. On the other hand, the Gipps model does not prioritise the passage of the EMV because it maximises the speeds of all vehicles. The MPC method achieves nearly no collisions with a low overall risk of collision; however, the average speeds of the vehicles are also considerably lower.

\begin{figure*}[htb]
    \centering
    \subfloat[]{\includegraphics[scale=0.35]{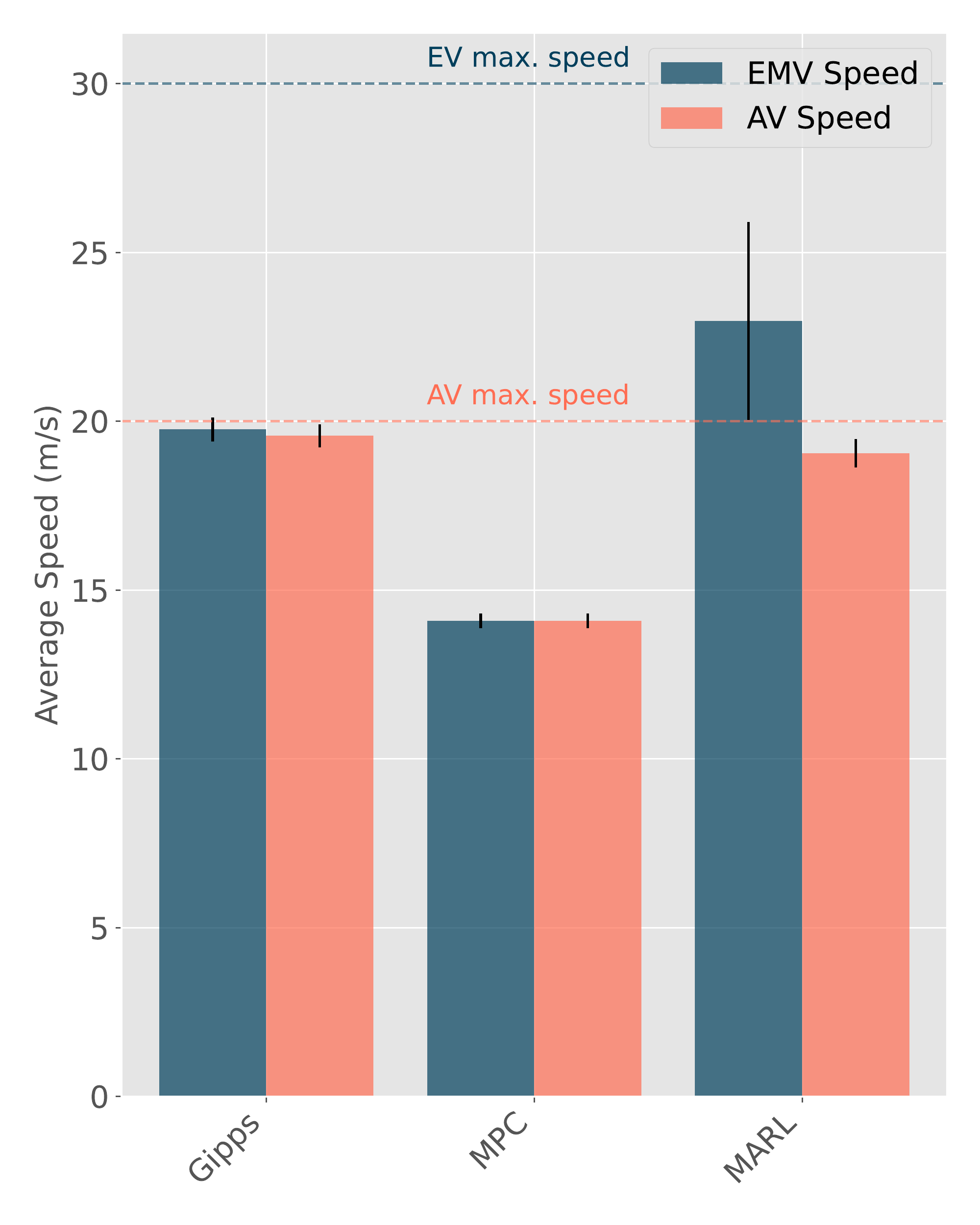} }
    \subfloat[]{\includegraphics[scale=0.35]{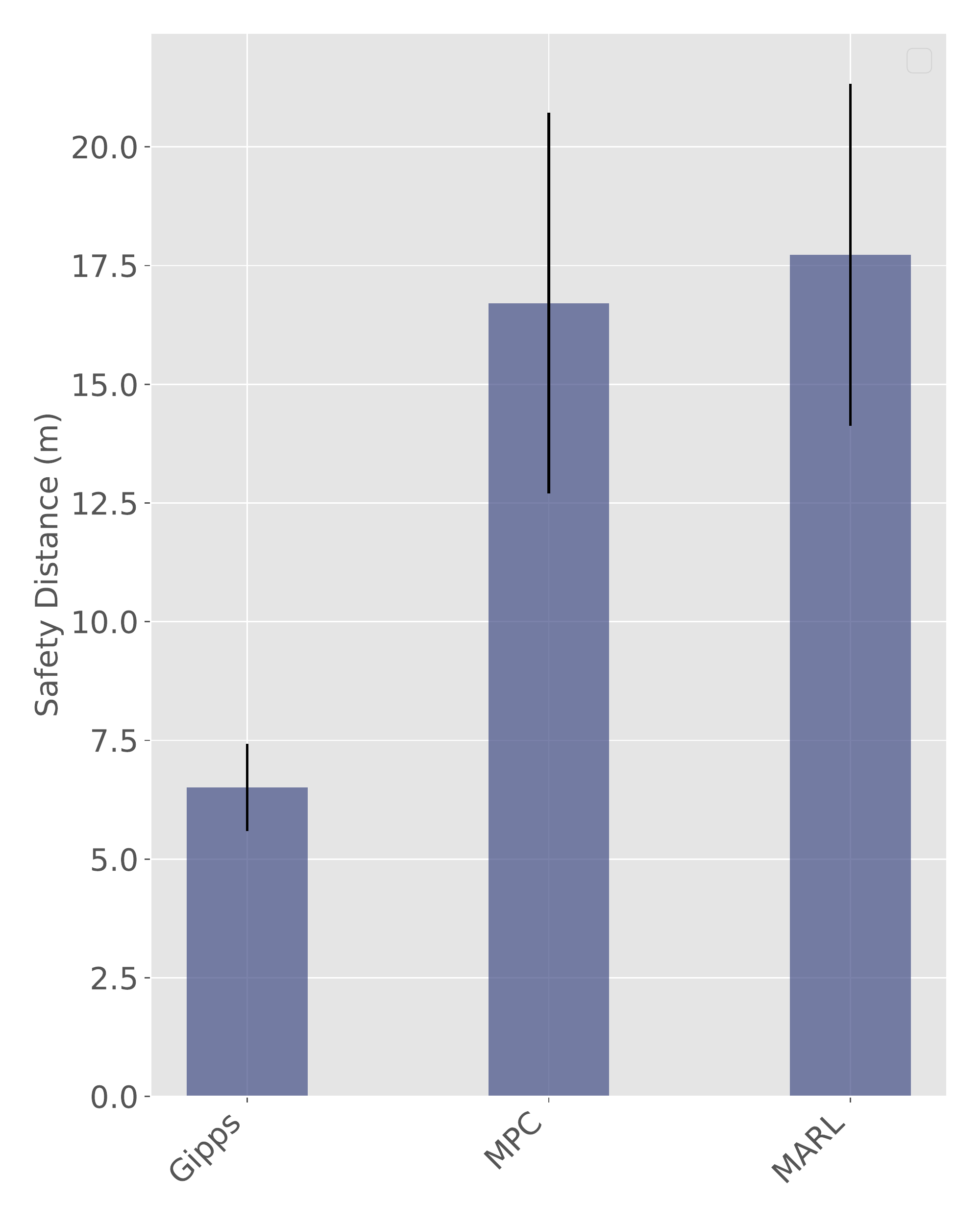}}
    \caption{Comparison of the policies obtained with the three different methods. (i) Average speeds for the EMV and AVs and (ii) Safety distances}
    \label{fig:boxplot_speeds}
\end{figure*}

The proposed method guarantees the safe transit of the EMV by incorporating the risk metric in the reward. No collisions occurred during the testing episodes, and the average collision risk is substantially lower than the Gipps method for both scenarios. The proposed method achieves a low collision risk thanks to preemptive manoeuvring actions from the AVs when the EMV is within their perception range. An example of these types of actions is shown in figure \ref{fig:lane_change}.

The MPC method achieves a low number of collisions and the lowest overall risk of collision. However, it is conservative in terms of speed, which causes it to perform worse in terms of total reward when compared to the MARL method.

\begin{figure}[t!]
  \centering
  \includegraphics[trim={0 1cm 0 0},clip, scale=0.7]{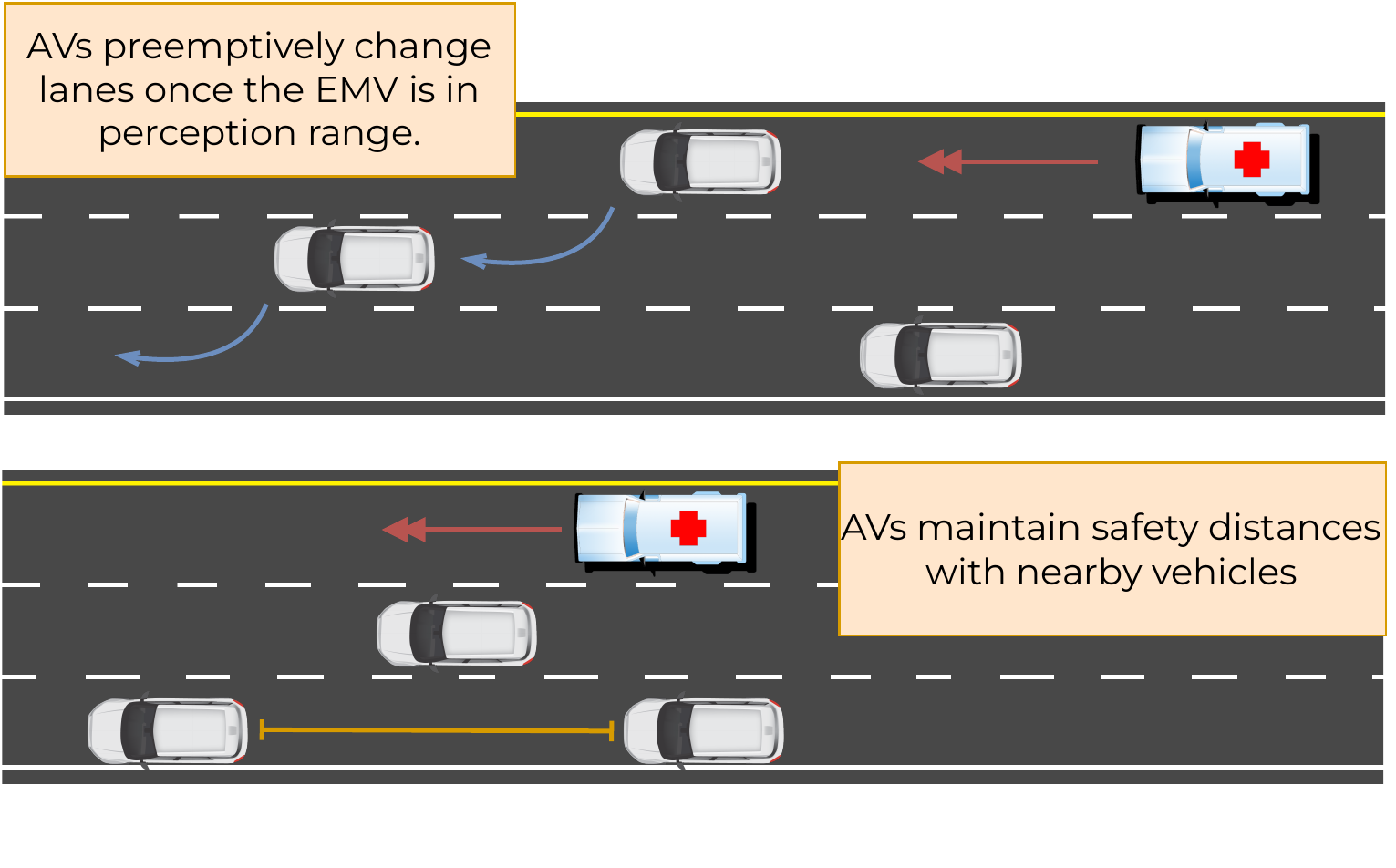}
  \caption{Example of preemptive lane-changing maneuvers by the AVs.}
  \label{fig:lane_change}
\end{figure}

\subsection{Reward shaping: Risk vs Efficiency}

We performed different experiments by varying the two main components of the reward function: risk of collision and traffic flow efficiency. The risk component is determined by equation (7), meanwhile the efficiency is determined by the average speed. If two vehicles are within the range of collision risk as defined by equation (3), then an increase in speed has a negative impact on the risk index. Conversely, if the speed decreases, the risk index improves. Thus, there is a trade-off between the speed and the risk of collision.

Table \ref{risk_efficiency} presents the results for different combinations of risk and efficiency within the reward function. We used a fixed number of agents (30) for these experiments and trained the MAPPO algorithm for 2000 episodes. The results shown in table \ref{risk_efficiency} correspond to averages over 10 test episodes. As expected, the speeds of both the EMV and the AVs are very high when the risk component is reduced. However, the collision rate and the collision risk are considerably worse compared to risk-aware policies (when the risk coefficient is greater than 0). In addition, the average safety distance between vehicles is lower when risk is not present in the reward function.

\def\arraystretch{1}
\begin{table*}[h!]
\centering
\caption{Results of different reward functions.}
\label{risk_efficiency}
\resizebox{\textwidth}{!}{%
\begin{tabular}{@{}cc|ccccc@{}}
\toprule
 Risk  & Efficiency &  EV Speed (m/s) & AV Speed (m/s) & Col. rate &  Col. risk & Safety Dist. (m) \\
 \bottomrule
  $0.0$ & $1.0$ & $27.83 \pm 1.16$ & $19.77 \pm 0.24$  & $3.90 \pm 1.40$   & $0.189 \pm 0.013$ & $16.95 \pm 3.82$ \\
  $0.5$ & $1.0$ & $18.86 \pm 0.81$  &  $17.86 \pm 0.16$ & $4.01 \pm 3.21$  & $0.176 \pm 0.021$ & $17.03 \pm 2.87$ \\
  $1.0$ & $1.0$ & $7.22 \pm 0.27$ & $7.22 \pm 0.11$ & $0.8 \pm 2.4$  & $0.114 \pm 0.031$ & $16.69 \pm 3.76$ \\
  $1.0$ & $0.5$ & $7.14 \pm 0.18$ &$7.12 \pm 0.16$ & $0.0 \pm 0.0$ & $0.105 \pm 0.015$  & $16.23 \pm 4.61$  \\
   $1.0$ & $0.0$ & $7.13 \pm 0.33$ &$7.09 \pm 0.055$ & $0.0 \pm 0.0$ & $0.094 \pm 0.019$  & $16.36 \pm 3.64$ \\
 
 \bottomrule
\end{tabular}}
\end{table*}

Figure \ref{fig:risk_objective} shows the effect of increasing the risk component in the reward function over the EMV average speed and the risk of collision. The speed of the EMV is reduced drastically as the risk component approaches the value of $1.0$. Moreover, when the risk component reaches the value $1.0$, the average speed is equal to the minimum velocity $(7 m/s)$. This is because the speed has a direct negative impact over the safe lateral and longitudinal safe distances, according to equations (1), (2), (4) and (5).

\begin{figure}[t!]
  \centering
  \includegraphics[scale=0.4]{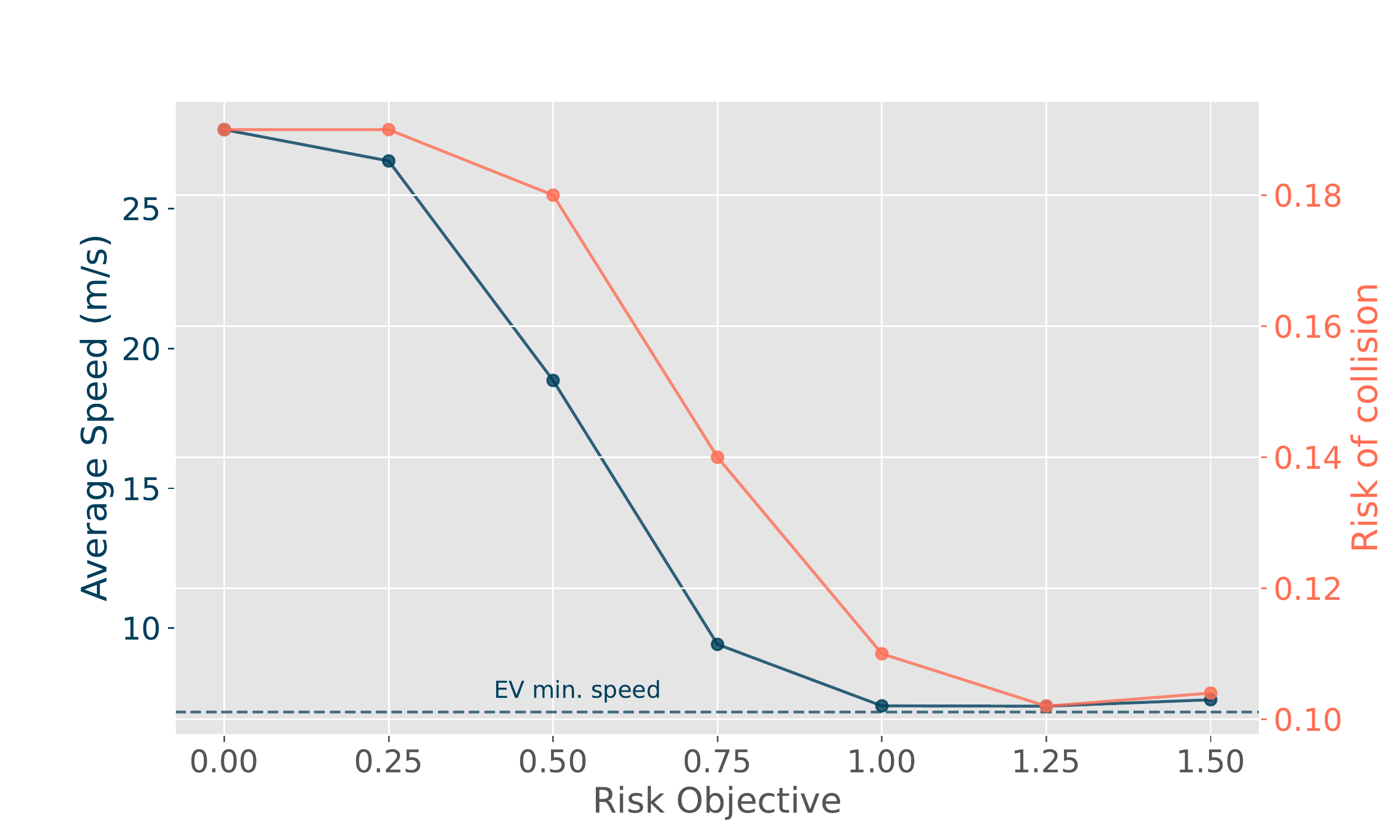}
  \caption{Influence of the risk component over the EMV speed and the risk of collision. The risk component is determined by equation (7).}
  \label{fig:risk_objective}
\end{figure}

\subsection{Scalability}

To test the scalability of the proposed method, we explored the effect of using a different number of agents over the training process and the performance of the trained policies. Additional agents do not alter the state dimension because agents have partial observability of the environment. Thus, a larger number of agents does not translate into higher neural network complexity. However, a larger number of agents produces higher environment complexity and, therefore, longer training time.

Figure \ref{fig:convergence_agents} shows the effect of different numbers of agents over the learning curves. The reward corresponds to the sum of rewards for all agents. For each case, four seeds were used, and the Figure shows the average and interval. Due to the higher number of vehicles on the road, the total average reward converges to considerably lower values when increasing the number of agents. 

\begin{figure}[t!]
  \centering
  \includegraphics[scale=0.5]{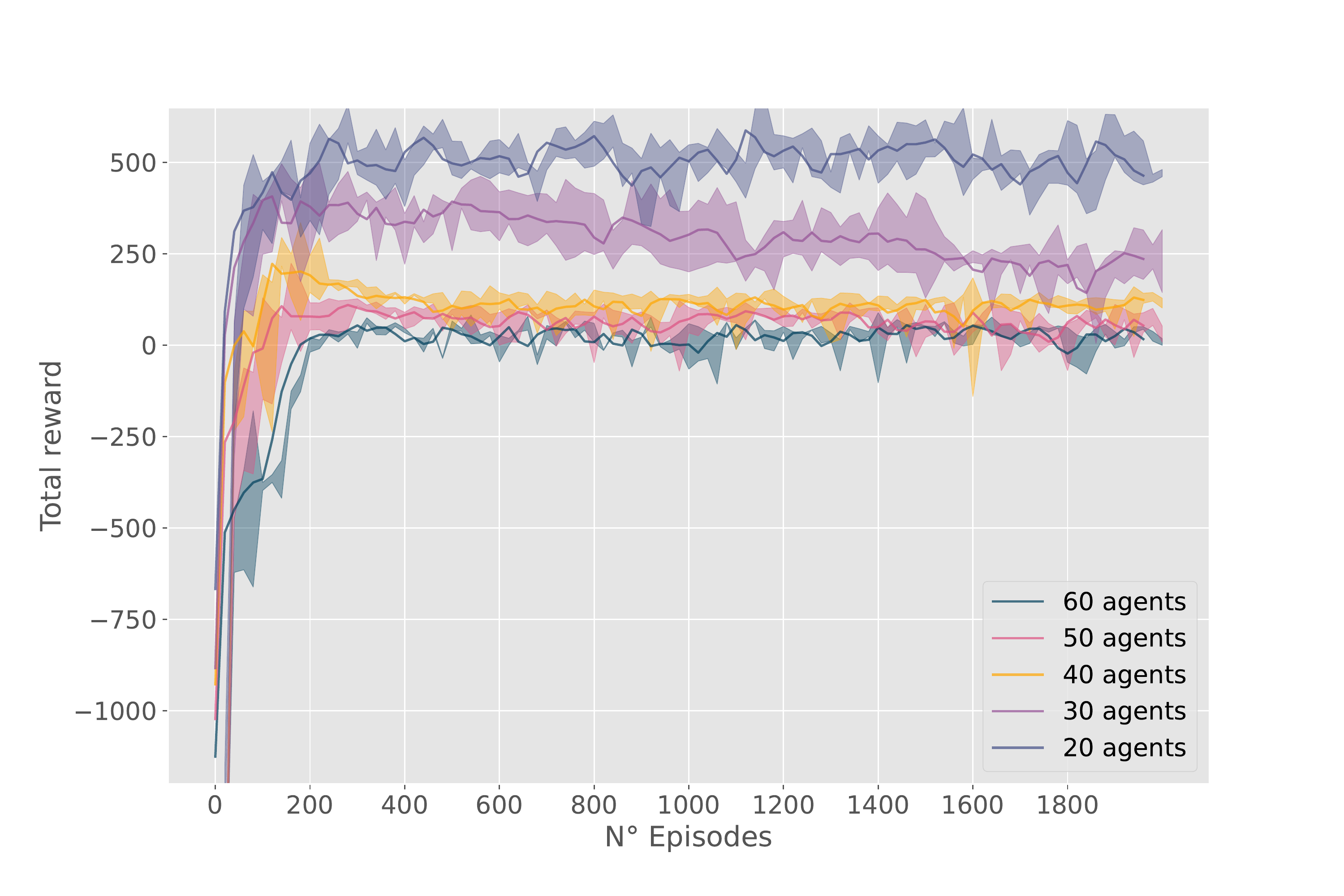}
  \caption{Learning curves for different number of agents.}
  \label{fig:convergence_agents}
\end{figure}

Table \ref{n_agents} shows the individual results for the trained multi-agent policies with a different number of agents. Results correspond to the average over 10 test episodes. All vehicles' initial positions and speeds were randomised for every test episode. When increasing the number of agents, AVs and the EMV must move at lower speeds, which causes the individual rewards to be lower. Moreover, vehicles will be closer to each other, which reduces the safety distance and thereby increases the risk of collision. The last column of the table contains the training time for the 2000 episodes, expressed as Frames Per Second (FPS). The total training time can be obtained by dividing the total number of steps by the FPS. For example, for the case with 20 agents, the total training time is $t = (2000 \cdot 400) /(31\cdot 3600) = 7.17 hrs$.

\def\arraystretch{1}
\begin{table*}[h!]
\centering
\caption{Results for different number of agents.}
\label{n_agents}
\resizebox{\textwidth}{!}{%
\begin{tabular}{@{}ccccccc@{}}
\toprule
 N° agents  & Reward &  EMV Speed & AV Speed & Average Dist. (m) &  Col. risk & FPS \\
 \bottomrule
  $20$ & $563.29 \pm 30.00$ & $26.89 \pm 0.49$ & $18.68 \pm 0.15$  & $17.03 \pm 3.87$   & $0.18 \pm 0.02$ & $31$ \\
  $30$ & $171.92\pm 36.64$ & $26.15 \pm 0.48$  &  $18.60 \pm 0.16$ & $13.97 \pm 3.64$  & $0.19 \pm 0.052$ & $21$ \\
  $40$ & $123.45 \pm 19.18$ & $18.30 \pm 1.08$ & $15.17 \pm 0.18$ & $8.67 \pm 5.54$  & $0.21 \pm 0.018$ & $14$ \\
  $50$ & $94.82 \pm 42.38$ & $16.27 \pm 2.37$ & $11.52 \pm 0.92$ & $8.70 \pm 4.38$ & $0.45 \pm 0.069$  & $10$  \\
  $60$ & $61.01 \pm 25.050$ & $11.37 \pm 0.99$ &  $10.11 \pm 0.58$ & $7.2 \pm 3.41$  & $0.50 \pm 0.028$ & $8$ \\
 
\bottomrule
\end{tabular}}
\end{table*}

 \subsection{Competitive scenario}

Until now, all experiments were carried out assuming a fully cooperative game, where all agents share the same networks (actor and critics), episode buffer and the reward function. We explored the performance of the proposed method in a competitive scenario, i.e., where agents seek to maximise their own individual rewards. For this, we created a separate actor and critic network for each agent and allowed them to maximise their own speed and penalise their collisions. Having a separate network for each agent makes the training substantially less efficient, increasing the training time nearly threefold. For this set of experiments, we used the two-lane road scenario and trained the MAPPO algorithm for 2000 episodes.

In both the cooperative and competitive scenarios, the MAPPO algorithm can guarantee no collisions and high safety distances between all vehicles. In addition, the resulting policies allow the EMV to go at a faster average speed. However, in the competitive scenario, vehicles do not coordinate to let the EMV pass efficiently. Figure \ref{fig:cooperative_com} contrasts the difference between cooperative and competitive multi-agent policies. In the fully cooperative case, vehicles move aside to minimise the braking and the number of lane changes of the EMV. In contrast, in the competitive scenario, the AVs do not change lanes preemptively, and they just maximise their own velocity; therefore, the EMV has to constantly overtake other vehicles. This causes the EMV to achieve a considerably lower average speed, which can be seen in figure \ref{fig:boxplot_coop} (i). On the other hand, figure \ref{fig:boxplot_coop} (ii) shows the difference in average risk of collision for the EMV. Due to the constant overtaking manoeuvres done by the EMV, the risk of collision of the substantially greater in the competitive case. The MAPPO algorithm can still generate collision-free policies in this case but the overall traffic efficiency is substantially lower.

\begin{figure}[t!]
  \centering
  \includegraphics[scale=0.7, trim={0 1.5cm 0 0},clip]{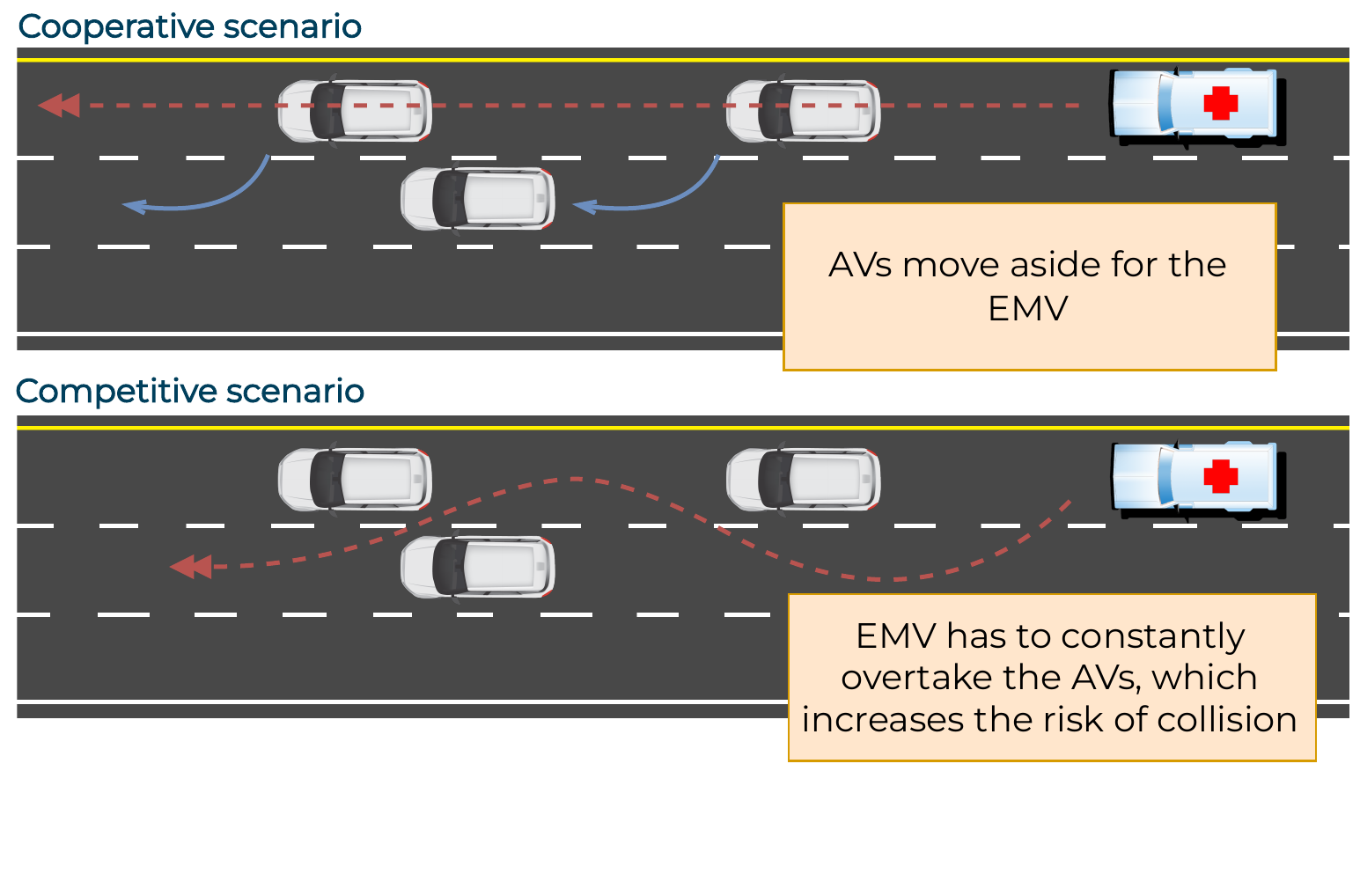}
  \caption{Cooperative vs. competitive multi-agent policies. In the fully cooperative scenario, the AVs move aside to let the EMV pass, while in the competitive scenario the EMV has to constantly perform overtaking manoeuvres.}
  \label{fig:cooperative_com}
\end{figure}

\begin{figure*}[htb]
    \centering
    \subfloat[]{\includegraphics[scale=0.35]{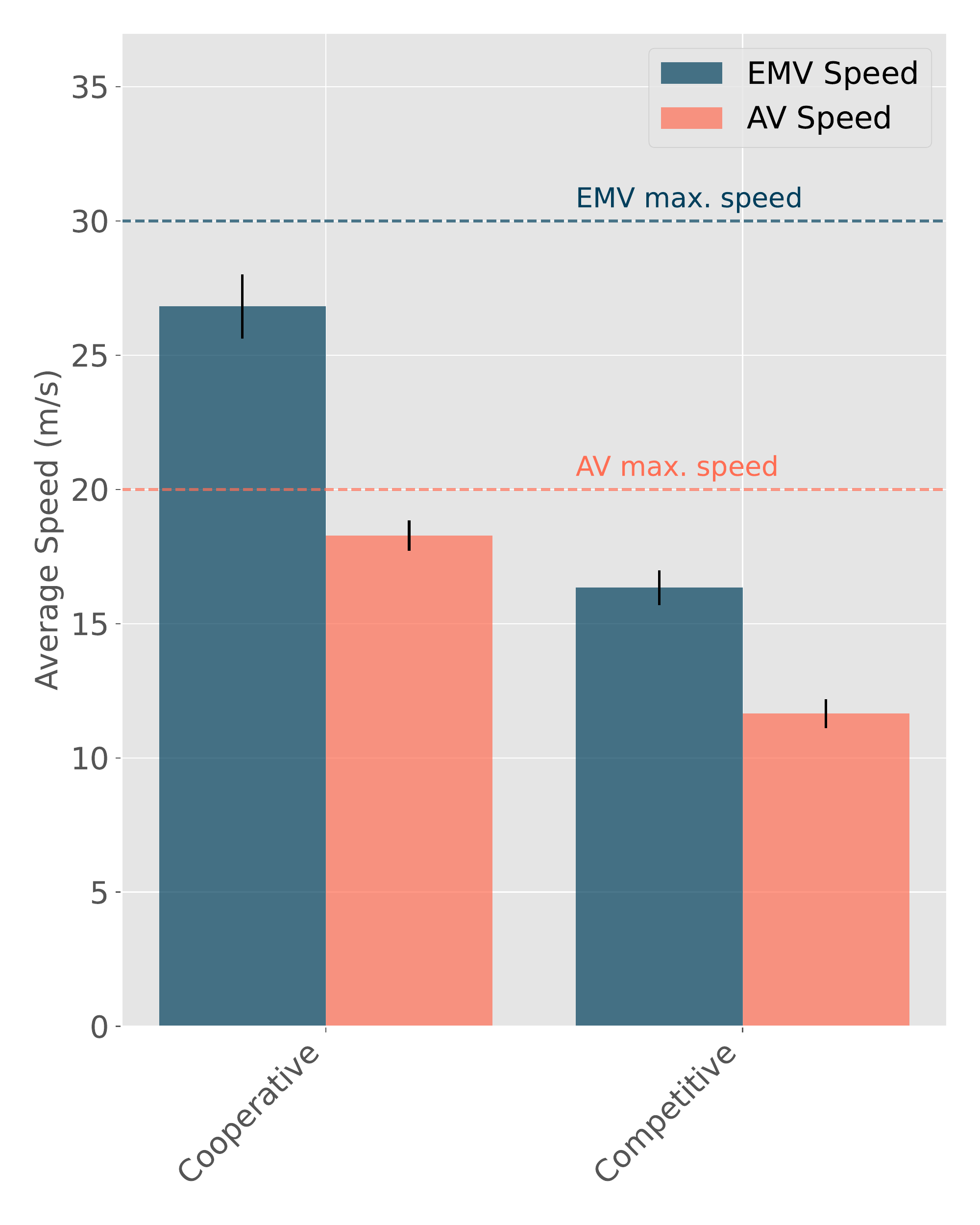}}
    \subfloat[]{\includegraphics[scale=0.35]{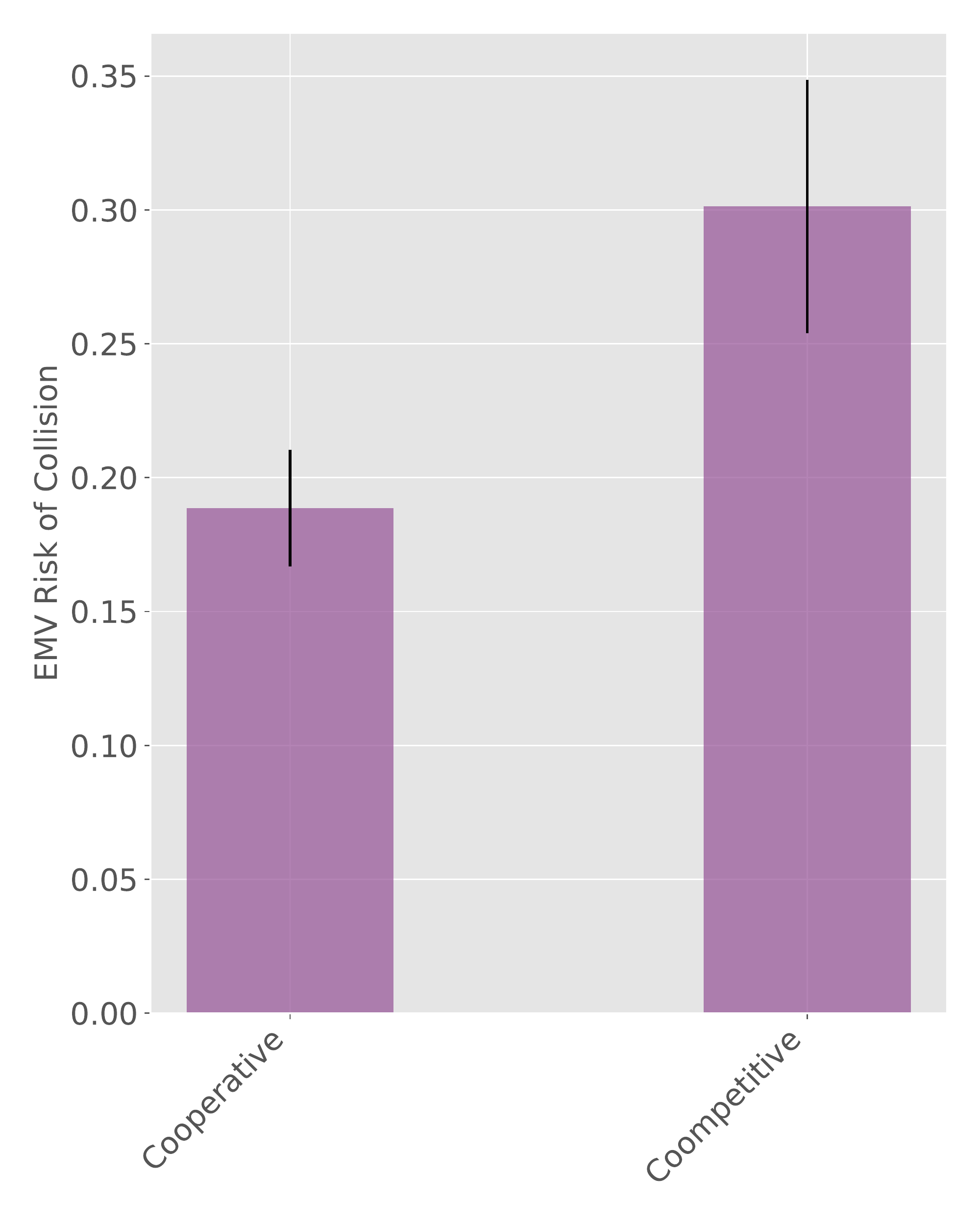} }
    \caption{Comparison between the cooperative and competitive scenarios (i) Average speeds (ii) EMV Risk of collision}
    \label{fig:boxplot_coop}
\end{figure*}

\section{Conclusion}

AVs have the potential to improve road safety and efficiency, although there are still many challenges regarding their practical application in traffic systems. This study proposed a multi-agent approach for the safe and efficient manoeuvring of autonomous traffic in the presence of EMVs. We used the MAPPO algorithm combined with risk metrics to determine manoeuvring actions for all cars in congested road settings. We explored the capabilities of the proposed approach in four case studies. First, we compared it with a traditional rule-based Car Following Model. Second, we contrasted the two main components of the reward function; risk and traffic efficiency. Third, we investigated the scalability and performance with different numbers of agents. Finally, we presented a competitive scenario where vehicles seek to maximise their own independent reward.

Results of the four case studies show the potential and capabilities of the proposed method. It consistently outperforms the car following baselines, obtaining higher average speeds for the EMV while maintaining a lower overall risk of collision. Furthermore, we show how the reward function can be tuned to achieve higher safety or traffic efficiency. However, a minimum amount of risk component is needed to guarantee safe transit and no collisions. A higher number of agents translates into lower average speeds of the EMV and AVs; however, the algorithm can still guarantee safe manoeuvring. The proposed method is also applicable to competitive scenarios, where vehicles act selfishly, maximising their own speed. The resulting policies involve constant manoeuvring of the EMV, which increases its risk of collision when compared to a fully cooperative case.

There is some future work that can explore new settings and can overcome some of the limitations of this study. We considered just a single road environment, with vehicles going in one direction. Other scenarios can be evaluated, such as a highway merging, two-directional lanes, intersections and the possibility of the EMV moving in the contraflow. In addition, explicit communication between agents to improve overall traffic safety should be explored in the future. However, having a one-to-one communication network between CAVs is not scalable; thus, innovative, efficient methods are needed for cases with multiple agents.

\section*{Acknowledgement(s)}
This research was partially supported by the President's Scholarship Programme funded by Imperial College London, and the Chilean National Agency for Research and Development (ANID) through the "BECAS DOCTORADO EN EXTRANJERO" program, Grant No. 72210279.

\bibliographystyle{apacite}
\bibliography{ambulance}

\section{Appendices}

In our formulation, the policy maps the agents' actions to a categorical distribution, which represents the different possible actions within the environment. During training, the actor network seeks to maximise the following equation \citep{yu_surprising_2021}:

\begin{equation}
    L(\theta) = [\frac{1}{Bn} \sum_{i=1}^{B} \sum_{k=1}^{n}  min(r_{\theta, i}^{(k)}A_{i}^{(k)}, clip(r_{\theta, i}^{(k)}, 1-\epsilon, 1+ \epsilon)A_{i}^{(k)}] + \sigma \frac{1}{Bn} \sum_{i=1}^{B} \sum_{k=1}^{n} S [\pi_{\theta}(\Omega_{i}^{(k)})],
\end{equation}

,where
\begin{equation}
    r_{\theta, i}^{(k)} = \frac{\pi_{\theta}(a_{i}^{(k)}| o_{i}^{(k)})}{\pi_{\theta_{old}}(a_{i}^{(k)}| o_{i}^{(k)})},
\end{equation}

where $A_{i}^{(k)}$is calculated using Generalized Advantage Estimation (GAE) with advantage normalization, $S$ is the policy entropy and $\sigma$ is the entropy coefficient. The critic minimises the following equation:

\begin{equation}
    L(\phi) = \frac{1}{Bn} \sum_{i=1}^{B} \sum_{k=1}^{n}(max[V_{\phi}(s_{i}^{(k)}) - \hat{R_{i}})^{2}, (clip(V_{\phi}(s_{i}^{(k)}), V_{\phi_{old}}(s_{i}^{(k)})-\epsilon, V_{\phi_{old}}(s_{i}^{(k)})+\epsilon)- \hat{R_{i}})^{2}],
\end{equation}

where $\hat{R_{i}}$ is the discounted error to go, $B$ is the batch size and $n$ the number of agents.

\appendix

\end{document}